\definecolor{myred}{RGB}{255 0 0} 
\definecolor{mygreen2}{RGB}{0 176 80} 
\definecolor{mygreen3}{RGB}{80,99,43} 
\definecolor{mygray}{gray}{.9}
\newcommand{\thickhline}{%
    \noalign {\ifnum 0=`}\fi \hrule height 1pt
    \futurelet \reserved@a \@xhline
}
\newtheorem{lemma}{Lemma}
\newtheorem{proposition}{Proposition}
\newtheorem{corollary}{Corollary}
\definecolor{mygray}{gray}{.9}
\newcolumntype{Y}{>{\raggedleft\arraybackslash}X}
\definecolor{ForestGreen}{RGB}{27,94,32}
\DeclareRobustCommand\onedot{\futurelet\@let@token\@onedot}
\def\@onedot{\ifx\@let@token.\else.\null\fi\xspace}
\def\eg{\emph{e.g}\onedot} 
\def\ie{\emph{i.e}\onedot} 
\title{
On-the-Fly VLA Adaptation via Test-Time Reinforcement Learning}
\author{
 \textbf{Changyu Liu\textsuperscript{1\thanks{Equal contribution}}},
 \textbf{Yiyang Liu\textsuperscript{1\footnotemark[1]}},
 \textbf{Taowen Wang\textsuperscript{2}},
 \textbf{Qiao Zhuang\textsuperscript{1}},
\\
 \textbf{James Chenhao Liang\textsuperscript{3}},
 \textbf{Wenhao Yang\textsuperscript{4}},
 \textbf{Renjing Xu\textsuperscript{2}},
 \textbf{Qifan Wang\textsuperscript{5}},
\\
 \textbf{Dongfang Liu\textsuperscript{6\footnotemark[2]}},
 \textbf{Cheng Han\textsuperscript{1\thanks{Corresponding authors}}}
\\
\\
 \textsuperscript{1}University of Missouri--Kansas City,
 \textsuperscript{2}Hong Kong University of Science and Technology (Guangzhou),\\
 \textsuperscript{3}U. S. Naval Research Laboratory,
 \textsuperscript{4}Lamar University,
 \textsuperscript{5}Meta AI,
 \textsuperscript{6}Rochester Institute of Technology
}
\begin{document}
\maketitle
\begin{abstract}
Vision-Language-Action (VLA) models have recently emerged as a powerful paradigm for general-purpose robot learning, enabling agents to map visual observations and natural-language instructions into executable robotic actions. 
Though popular, they are primarily trained via supervised fine-tuning or training-time 
reinforcement learning, requiring explicit fine-tuning phases, human interventions, or controlled data collection.
Consequently, existing methods remain unsuitable for challenging simulated- or physical-world deployments, where robots must respond autonomously and flexibly to evolving environments.
To address this limitation, we introduce a Test-Time Reinforcement Learning for VLAs (TT-VLA), a framework that enables on-the-fly policy adaptation during inference. TT-VLA formulates a dense reward mechanism that leverages step-by-step task-progress signals to refine action policies during test time while preserving the SFT/RL-trained priors, making it an effective supplement to current VLA models. 
Empirical results show that our approach enhances overall adaptability, stability, and task success in dynamic, previously unseen scenarios under simulated and real-world settings. We believe TT-VLA offers a principled step toward self-improving, deployment-ready VLAs.
\end{abstract}
\vspace{-2mm}
\section{Introduction}
\label{sec:intro}

The ability to adapt to changing conditions is a fundamental requirement for intelligent agents operating in the real world. However, agents trained in fixed, well-defined environments frequently fail when confronted with the inherent real-world dynamic variability~\cite{dulac2021challenges, tambe1995intelligent, kormushev2013reinforcement}. This gap between static training regimes and dynamic deployment environments remains a central challenge in robotics and embodied intelligence.

\begin{figure}[t]
    \centering
    \includegraphics[width=0.46\textwidth]{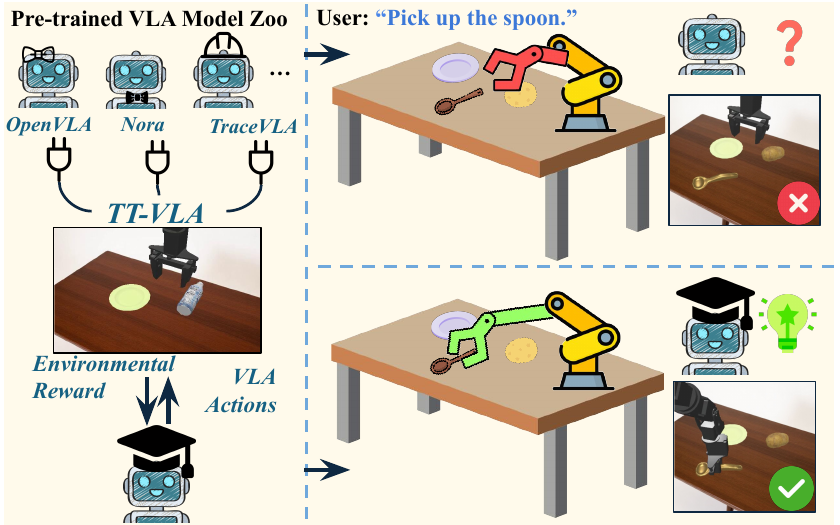}
    \vspace{-1mm}
    \caption{\textbf{TT-VLA supplements to SFT/RL-trained VLAs} by continuously adapting policies to environment-derived rewards at test time, improving robustness to distributional shifts without retraining.
    }
    \vspace{-7mm}
    \label{fig:overview}
\end{figure}

Current Vision-Language-Action (VLA) models stand at the heart of this challenge. These VLAs integrate perception, language understanding, and control to translate visual observations and natural language instructions into executable actions, representing a significant step and performance gains 
toward general-purpose embodied intelligence~\cite{kim24openvla, Brohan2023rt1, Brianna2023RT2, wang2025vla, kwok2025robomonkey}. Yet, 
most VLAs remain bound to fixed policies, which are generally trained either through supervised fine-tuning (SFT) or through training-time reinforcement learning (RL) on curated datasets, explicit fine-tuning phases, and controlled environments.

In practice, these rigid policies limit VLAs' capacity 
for challenging simulated-/physical-world applications, where \textbf{\textit{robots must adapt at test time as conditions evolve or distribution shifts}}. 
In a broader perspective, current research in language or vision 
understanding has begun to explore 
test-time training (TTT)~\cite{zuo2025ttrl, li2025system, karmanov2024efficient, hu2024bafta, ma2023swapprompt} to update models directly on incoming test streams, often without ground-truth labels, underscoring a promising direction toward flexible model adjustments. These advances have emerged in other domains, leaving VLA test-time adaptation largely underexplored. We further find that current TTT cannot be directly applied to VLAs, as the multimodal nature brings substantial distributional shifts and is evolving (see \S\ref{subsec:discussion}).

In light of this view, we propose a \underline{T}est-\underline{T}ime Reinforcement Learning framework for \underline{VLA} (TT-VLA) that performs online inference-time policy fine-tuning efficiently without retraining cycles, preserving SFT/training-time RL priors while closing the loop with dense inference-time reward signals. 
Here, TTT provides accessibility for test-time adaptation, while RL complements it by addressing substantial variations in environmental conditions and underlying distributions.

Different from traditional session-based RL (\ie, SimpleVLA-RL), we creatively derive dense shaping signals from task-agnostic proxies to timely and effectively utilize the limited test-time information and optimize the VLA policy. These shaping signals operate independently across time frames, enabling stable, versatile, and continuous adjustments.
Our design also bridges the prevailing offline-RL/VLA pipeline and the demands of real-world robotics, enabling continuous self-improvement under dynamic, previously unseen conditions. Extensive experiments conducted in both simulated and real-world robotic environments demonstrate that our method can naturally boost the performance of existing SFT-/RL-based approaches.

\section{Related Work}
\label{sec:rl}

\subsection{VLA Generalization \& Adaptation}\label{subsec:lr:ga}
Current VLAs~\cite{Brohan2023rt1, MeesHB2022What, Pong2020Skew, kwok2025robomonkey} are predominantly optimized via supervised fine-tuning on large, curated triplets~\cite{Brianna2023RT2, kim24openvla, plaat2024reasoning, jiang2025alphadrive, kim2025robot, liu2024robomamba}, which yields strong performance in static, well-structured settings but leads to brittle behavior and limited robustness under distributional shifts due to the lack of adaptive learning mechanisms~\cite{shenfeld2025rl, chen2025sft, xu2024rldg, chen2025conrft, wang2024exploring}.

To address these limitations, recent studies have explored integrating VLAs with reinforcement learning. RL allows policies to actively interact with pre-collected environments or demonstrations, thereby enabling continual adaptation and performance improvement beyond static supervision~\cite{huang2025co, zhang2025reinbot, mark2024policy, chen2025tgrpo, ye2025vla, li2025simplevla, lu2025vla, jiang2025irl, chen2025rlrc, wu2021online, guo2025comprehensive}. By incorporating interaction-driven feedback, RL-augmented VLAs refine their behaviors to task-specific objectives~\cite{huang2025co, zhang2025reinbot, mark2024policy} and environmental variations~\cite{chen2025tgrpo, ye2025vla, li2025simplevla, lu2025vla, jiang2025irl, chen2025rlrc, wu2021online, li2025vla, guo2025comprehensive}, demonstrating improved sample efficiency and generalization across unseen scenarios.
Despite these advances, existing RL approaches still adhere to a training-deployment separation paradigm,
without mechanism for inference-time adaptation once the model is deployed. This gap leaves VLAs vulnerable to unanticipated dynamics
during testing, where
retraining is impractical due to time or data constraints.

\subsection{Test-Time Training}\label{appendix:subsec:TTL}
Test-time training (TTT) is originally proposed for adapting pre-trained models to a target domain using only unlabeled test data~\cite{sun2020test,hu2025test,yoonc,xiao2025dynaprompt,zhu2024efficient,zuo2025ttrl}. Unlike SFT~\cite{jia2022visual,mosbach2020stability,han20232,wang2024exploring,liu2025re,liu2024alora,hu2022lora,zeng2024visual, wang2024mmpt} or traditional RL~\cite{sutton1999reinforcement,guo2025deepseek,huang2025co,zhang2025reinbot, mark2024policy}, 
TTT leverages self-supervised objectives (\eg, entropy minimization) to calibrate models during inference, thereby enabling effective domain adaptation in the absence of both human-curated labels and external feedback.

Intuitively, TTT can be naturally extended to VLAs to enable efficient adaptation during inference. However, unlike relatively intuitive single-domain tasks (\eg, vision, language), where pre-trained models exhibit high generalization capability~\cite{brown2020language,wei2021finetuned,goyal2023finetune} and inter-task discrepancies are relatively minor~\cite{hu2025test,liu2021ttt++,zhao2023pitfalls,han2025test},
robotic tasks often entail substantial distributional shifts and evolving conditions across both visual and linguistic modalities~\cite{Pong2020Skew,wang2024exploring,kim24openvla,liu2024robomamba}. Consequently,
fixed, protocol-driven self-supervised objectives become inadequate (see \S\ref{subsec:discussion}) and overly generic.

Recent work has explored reinforcement-based alternatives to the purely self-supervised test-time training objectives. In particular, EVOLVE-VLA~\cite{bai2025evolve} leverages task progress as a reward signal to optimize VLA policies during deployment. However, its reliance on GRPO-style optimization incurs substantial computational overhead, limiting its applicability to real-time robotic settings with strict latency constraints. We provide a more detailed discussion regarding the use of GRPO for TTT in VLAs in Appendix~\S\ref{appendix:grpo}.

To address these limitations, we propose an RL-driven test-time learning framework that enables efficient online adaptation (see \S\ref{sec:method}).

\section{Method}
\label{sec:method}
In this section, we introduce
\underline{T}est-\underline{T}ime Reinforcement Learning framework for \underline{VLA} (TT-VLA), a novel 
test-time training approach designed to enhance VLA performance through on-the-fly policy adaptation.
In \S\ref{subsec:prelim}, we first 
provide background on Proximal Policy Optimization (PPO) and VLAs. We then present TT-VLA in \S\ref{subsec:method}.
\S\ref{subsec:theory} further provides TT-VLA's theoretical analysis and insights. The overall framework is shown in Fig.~\ref{fig:pipeline}. 

\subsection{Preliminaries}
\label{subsec:prelim} 
\noindent \textbf{Problem Formulation.}
We model robotic manipulation as a Partially Observable Markov Decision Process (POMDP)~\cite{kaelbling1998planning}, defined as:
\begin{equation}
\vspace{-0.2em}
\scalebox{0.80}{$
\mathcal{M} = (\mathcal{S}, \mathcal{A}, \mathcal{O}, \mathcal{L}),
$}\vspace{-0.2em}
\end{equation}
where $\mathcal{S}$ denotes the state space of the robot and environment, $\mathcal{A}$ is the action space, $\mathcal{O}$ represents the multimodal observation space (\eg, RGB and proprioception), and $\mathcal{L}$ denotes the space of natural-language task instructions. 
At the start of a task episode, the VLA policy $\pi_\theta$ receives an instruction $l \in \mathcal{L}$ and an initial observation $o_0$. 
The goal of the VLA policy $\pi_\theta$ is to generate a sequence of actions:
\begin{equation}\vspace{-0.2em}
\scalebox{0.80}{$
a_{0:T-1} \sim \pi_\theta(a_t \mid o_{t-H+1:t}, l),
$}\vspace{-0.2em}
\end{equation}
where $o_t$ and $a_t$ denote the observation and action at time $t$, $T$ is the episode length, and $H$ is the number of past observations used as policy input.

The above formulation characterizes the standard VLA decision process. However, it inherently assumes a fixed, pre-trained policy. Real-world deployments, on the other hand, demand adaptability to dynamic environments.
Therefore, under the test-time adaptation, our goal should now switch to adjusting the pretrained policy $\pi_\theta$ online during deployment flexibly \textit{without} any access to training data, environment resets, or human intervention. 

\noindent\textbf{Proximal Policy Optimization (PPO)}. 
PPO is an actor–critic policy-gradient method that uses a clipped surrogate objective to constrain policy updates, ensuring stable optimization by keeping the updated policy within a trust region of the previous policy. Formally, the PPO objective is defined as: 
\begin{equation}
\label{eq:PPO}
\vspace{-0.2em}
\scalebox{0.75}{$
L^{\text{PPO}}(\theta) =
\mathbb{E}_t \left[
    L^{\text{CLIP}}_t(\theta)
    - c_1 L^{\text{Value}}_t(\theta)
    + c_2 L^{\text{entropy}}_t(\theta)
\right],
$}
\vspace{-0.2em}
\end{equation}
where $L_t^{\text{CLIP}}(\theta)$ represents the clipped policy loss, 
$L_t^{\text{Value}}(\theta)$ denotes the value function loss, $L_t^{\text{Entropy}}(\theta)$ is the entropy regularization term, and $c_1$ and $c_2$ are weighting coefficients. 
The clipped policy objective is defined as:
\begin{equation}
\label{eq:PPO_clip}
\vspace{-0.2em}
\scalebox{0.72}{$
L^{\text{CLIP}}_t(\theta) =
\mathbb{E}_t \left[
    \min\!\left(
        r_t(\theta) \hat{A}_t,\,
        \text{clip}\!\big(r_t(\theta), 1 - \epsilon, 1 + \epsilon\big)\hat{A}_t
    \right)
\right],
$}\vspace{-0.2em}
\end{equation}
where
$r_t(\theta) = \frac{\pi_\theta(a_t \mid s_t)}{\pi_{\theta_{\text{old}}}(a_t \mid s_t)}$
is the ratio between the new and old policy, 
$\epsilon$ controls the clipping range, $\hat{A}_t$ denotes the advantage estimate, and $\text{clip}(\cdot)$ denotes the clipping operation. 
PPO typically employs Generalized Advantage Estimation (GAE) to estimate $\hat{A}_t$, 
given by:
\begin{equation}
\label{eq:GAE}
\vspace{-0.2em}
\scalebox{0.78}{$
\hat{A}_t = \sum_{l=0}^{\infty} (\gamma \lambda)^l \, \delta_{t+l},
$}
\vspace{-0.3em}
\end{equation}
where
$\delta_t = r_t + \gamma V(s_{t+1}) - V(s_t)$
is temporal-difference (TD) residual, $\gamma$ is the discount factor, $\lambda$ is the smoothing parameter, and $V(s_t)$ is 
the estimated expected return from state $s_t$.

\subsection{TT-VLA}
\label{subsec:method} 
In PPO, both the policy $\pi_\theta$ and value function $V_\theta$ are trained jointly. However, in VLA test-time adaptation,
learning a reliable value function is generally infeasible due to two reasons: \textit{\textbf{\ding{172}}} \textit{\textbf{Limited samples}}: Test-time adaptation relies on extremely limited interaction data, a single episode, which is insufficient for accurate return estimation. \textit{\textbf{\ding{173}}}~\textit{\textbf{Strict time constraints}}: Test-time updates for VLA models must be performed online under strict latency constraints. To overcome these limitations, we develop a value-free PPO, which enables policy adaptation \textit{without} an explicit value function. 

\begin{figure*}[t]
    \centering
    \includegraphics[width=0.98\textwidth]{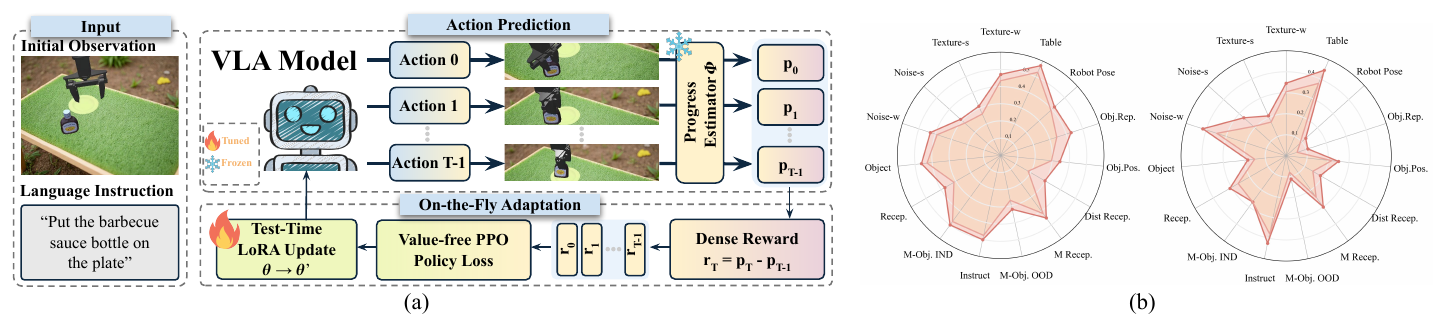}
    \vspace{-6pt}
    \caption{
    \textbf{Overview of TT-VLA.}
    \textbf{(a) Overall pipeline.} 
    In TT-VLA, a pretrained VLA policy receives an observation and instruction, executes actions in the environment, and receives dense, progress-based rewards computed by a progress estimator. These rewards are used to update the policy online via a value-free PPO objective, enabling continuous within-episode policy adaptation at test time (see \S\ref{subsec:method}).
    \textbf{(b) Effectiveness.} TT-VLA consistently improves the performance of diverse VLA backbones across unseen tasks, demonstrating robust generalization and adaptability under evolving conditions or distributional shifts (see \S\ref{subsec:sim-result}-\ref{subsec:real-world}).
    }
    \label{fig:pipeline}
    \vspace{-10pt}
\end{figure*}

\noindent\textbf{Dense Progress-Based Reward.}
Most existing RL-based VLAs~\cite{li2025simplevla, liu2025can} rely on sparse terminal rewards, typically indicating binary task success or failure at the end of an episode. While such rewards are effective during offline training, where episodes can be replayed or reset, they are impractical for test-time adaptation: 
the policy updates are delayed until task completion, preventing any mid-episode correction or online adjustment.
Consequently, a robot operating with sparse rewards cannot refine its behavior during inference, leading to fragile and non-adaptive performance in dynamic environments.

Let $p_t \in [0, 1]$ denote task progress at time step $t$. Intuitively, $p_t$ should increase when actions advance task completion 
and decrease when actions undo or reverse previously achieved progress. In a POMDP setting, we estimate progress directly from observations and language instructions as:
\begin{equation}
\label{eq:progress_estimation}
\vspace{-0.2em}
\scalebox{0.80}{$
p_t = \Phi(o_{0:t+1}, l),
$}
\vspace{-0.2em}
\end{equation}
where $\Phi$ denotes a task progress predictor conditioned on the observation history and instruction $l$.  The per-step dense reward is then defined as the temporal difference in progress: 
\begin{equation}
\label{eq:reward}
\vspace{-0.2em}
\scalebox{0.80}{$
r_t = p_t - p_{t-1}.
$}
\vspace{-0.2em}
\end{equation}
We instantiate $\Phi$ using the Vision-Language-Action-Critic model (VLAC)~\cite{zhai2025vision}, a pretrained multimodal model that serves as a scalar regressor for task progress estimation.

This progress-based reward exhibits three desirable properties. First, it requires no external supervision during inference, allowing fully autonomous adaptation at test time. 
Second, it provides dense, step-wise feedback that facilitates continuous mid-episodic policy adaptation. Third, it encourages monotonic progress toward task completion while discouraging oscillatory or regressive behaviors. 

\noindent\textbf{Training Objective.}
Under the test-time VLA setting, learning an accurate value function $V_\theta$ within a single episode is impractical. 
We therefore adopt a value-free PPO variant that removes the value function learning and directly uses the per-step reward signal from Eq.~\ref{eq:reward} for policy adaptation. 

Starting from Eq.~\ref{eq:PPO}, 
we remove auxiliary losses by setting $c_1=0$ and $c_2=0$, discarding both value regression and entropy regularization term.
While the entropy term encourages exploration during training, test-time adaptation prioritizes rapid fitting of the current task rather than broad exploration. As a result, our objective focuses solely on stable policy refinement, while preserving the clipped surrogate loss. Eq.~\ref{eq:PPO} now turns into:
\begin{equation}
\label{eq:PPO_2}
\vspace{-0.2em}
\scalebox{0.80}{$
L(\theta) =
\mathbb{E}_t \left[
    L^{\text{CLIP}}_t(\theta)
\right].
$}
\vspace{-0.2em}
\end{equation}

To make the agent precisely capture the immediate value of the current action, we further redefine the advantage to depend only on the reward obtained from that action, rather than the exponentially weighted combination of TD residuals used in GAE (see Eq.~\ref{eq:GAE}). In other words, we focus on how each individual action contributes to instantaneous progress rather than estimating long-term returns.  To achieve this, we set $\lambda = 0$ and $\gamma = 0$, collapsing GAE into a one-step formulation:
\begin{equation}
\label{eq:new_advantage}
\vspace{-0.2em}
\scalebox{0.80}{$
\hat{A}_t = \delta_{t} = r_t.
$}
\vspace{-0.2em}
\end{equation}
Here, $\delta_{t} = r_t$ since we remove the value function, making the TD residual the immediate reward signal.
This simplification ensures that policy updates directly reflect the progress at each step, allowing the agent to adapt rapidly to changing conditions without relying on a value function. 

\noindent\textbf{Overall Pipeline.}
At the beginning of each episode, the pretrained VLA receives an initial observation $o_0$ and instruction $l$. The VLA policy $\pi_\theta$ generates the first action $a_0$. At each subsequent time step $t$, the VLA receives the latest observation $o_t$ and outputs an action $a_t$. After execution, the progress estimator $\Phi$ computes the task progress $p_t$, and the corresponding dense reward $r_t$ (see Eq.~\ref{eq:reward}). This reward is used to compute the policy loss via Eq.~\ref{eq:PPO_2} in a value-free manner, and the policy parameters $\theta$ are updated accordingly. 
The updated policy $\pi_{\theta'}$ is then used to generate subsequent actions, allowing continuous refinement throughout the episode. 
The pseudo code is shown in Appendix Algorithm~\ref{alg:ttvla}.

\subsection{Theoretical Analysis of TT-VLA}
\label{subsec:theory} 
In \S\ref{subsec:method}, we proposed TT-VLA using a progress-based dense reward and a value-free formulation for test-time adaptation. In this section, we provide a theoretical justification for these design choices. 

\begin{proposition}[Vanishing learning signal under progress-difference reward]
Let the per-step reward be defined as the progress difference
$r_t = p_t - p_{t-1}$ with $p_t \in [0,1]$.
Assume that the value function represents the remaining progress,
$V(s_t) = 1 - p_{t-1}$, and the discount factor is $\gamma = 1$.
Then the temporal-difference (TD) error vanishes for all $t$, and consequently
the GAE $\hat{A}_t$ is identically zero:
\begin{equation}
\vspace{-0.2em}
\scalebox{0.80}{$
\delta_t = 0, \quad \hat{A}_t = 0, \qquad \forall \lambda \in [0,1].
$}
\vspace{-0.2em}
\end{equation}
\end{proposition}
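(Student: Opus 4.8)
The plan is to verify both claims by direct substitution into the definition of the TD residual from Eq.~\ref{eq:GAE}, and then to propagate the result through the GAE sum. First I would recall that $\delta_t = r_t + \gamma V(s_{t+1}) - V(s_t)$. The one preparatory observation that requires attention is that the value-function ansatz $V(s_t) = 1 - p_{t-1}$ must be evaluated consistently at the successor state: shifting the time index by one gives $V(s_{t+1}) = 1 - p_t$. This index shift is the only place where a mistake could plausibly creep in.

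Next I would substitute $r_t = p_t - p_{t-1}$, $\gamma = 1$, $V(s_t) = 1 - p_{t-1}$, and $V(s_{t+1}) = 1 - p_t$ into the TD residual. The $p_t$ contributions cancel between the reward and the successor value, the $p_{t-1}$ contributions cancel between the reward and the current value, and the constant $1-1$ cancels as well, leaving $\delta_t = 0$ for every $t$. This is a routine telescoping cancellation with no hidden subtlety; it simply reflects that the progress-difference reward is exactly the increment of the value function under this ansatz.

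Finally I would substitute into the GAE definition $\hat{A}_t = \sum_{l=0}^{\infty} (\gamma\lambda)^l \delta_{t+l}$. Since the preceding step gives $\delta_{t+l} = 0$ for all $l \geq 0$, every summand is identically zero, so $\hat{A}_t = 0$ independently of the smoothing parameter $\lambda \in [0,1]$; the convergence of the geometric weights $(\gamma\lambda)^l$ is irrelevant, and even the boundary case $\lambda = 1$ (where the weights do not decay) is handled, because the series vanishes term by term. The main obstacle here is interpretive rather than technical: the computation is a one-line identity, and its real purpose is to expose the degeneracy that motivates the value-free design in Eq.~\ref{eq:new_advantage} --- namely, that the ``natural'' choice of a progress-based critic makes the learning signal collapse entirely, so one cannot simply pair a progress-difference reward with a progress-based value function and must instead drop the value term and use $\hat{A}_t = r_t$.
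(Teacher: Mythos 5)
Your proposal is correct and follows essentially the same route as the paper's proof: substitute $r_t = p_t - p_{t-1}$, $V(s_t) = 1 - p_{t-1}$, $V(s_{t+1}) = 1 - p_t$, and $\gamma = 1$ into the TD residual to get $\delta_t = 0$ by telescoping cancellation, then conclude $\hat{A}_t = 0$ since the GAE is a weighted sum of vanishing residuals. Your explicit attention to the index shift in the value ansatz and to the $\lambda = 1$ boundary case is slightly more careful than the paper's write-up, but the argument is the same.
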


\begin{proof}
Substituting $r_t = p_t - p_{t-1}$, $V(s_t)=1-p_{t-1}$, and $\gamma=1$ into the TD residual yields
\begin{equation}
\vspace{-0.2em}
\scalebox{0.85}{$
\begin{aligned}
\delta_t 
&= r_t + \gamma V(s_{t+1}) - V(s_t) \\
&= (p_t - p_{t-1}) + (1 - p_t) - (1 - p_{t-1}) \\
&= 0.
\end{aligned}
$}
\vspace{-0.2em}
\end{equation}
By Eq.~\ref{eq:GAE}, GAE is a weighted sum of TD residuals, it follows that $\hat{A}_t = 0$. Therefore, the policy gradient receives no
learning signal.
\end{proof}

\begin{corollary}[Negative TD bias when $\gamma<1$]
Let $r_t = p_t - p_{t-1}$ with $p_t \in [0,1]$ and $V(s_t) = 1 - p_{t-1}$.
If $0<\gamma<1$, 
then TD residual $\delta_t<0$, introducing a systematic negative bias in advantage estimation.
\end{corollary}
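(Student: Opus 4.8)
The plan is to substitute the given expressions directly into the TD residual formula and simplify, exactly as in the proposition's proof, but now keeping $\gamma$ as a free parameter strictly between $0$ and $1$ rather than collapsing it to $1$. First I would write $\delta_t = r_t + \gamma V(s_{t+1}) - V(s_t)$ and plug in $r_t = p_t - p_{t-1}$, $V(s_{t+1}) = 1 - p_t$, and $V(s_t) = 1 - p_{t-1}$. This gives
\begin{equation}
\scalebox{0.85}{$
\begin{aligned}
\delta_t
&= (p_t - p_{t-1}) + \gamma(1 - p_t) - (1 - p_{t-1}) \\
&= p_t - \gamma p_t + \gamma - 1 \\
&= -(1-\gamma)(1 - p_t).
\end{aligned}
$}
\end{equation}

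Next I would read off the sign. Since $0 < \gamma < 1$ we have $1 - \gamma > 0$, and since $p_t \in [0,1]$ we have $1 - p_t \ge 0$, so the product $(1-\gamma)(1-p_t) \ge 0$ and therefore $\delta_t = -(1-\gamma)(1-p_t) \le 0$. This already establishes the claimed nonpositivity. To get the strict inequality $\delta_t < 0$ as literally stated in the corollary, I would note that it holds precisely when $p_t < 1$, i.e. whenever the task is not yet complete; at the terminal step where $p_t = 1$ the residual is exactly zero. The magnitude of the bias, $(1-\gamma)(1-p_t)$, has a clean interpretation: it scales with both how far the discount is from $1$ and how much task progress remains, which is worth stating as the takeaway.

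The main obstacle is not computational — the algebra is a one-line simplification — but rather a matter of precision in the statement. The corollary asserts $\delta_t < 0$ with a strict inequality, yet the computation shows $\delta_t = 0$ exactly when $p_t = 1$. I would therefore either restrict the claim to non-terminal steps ($p_t < 1$), which is the regime of interest since a systematic bias only matters while the agent is still acting, or weaken the conclusion to $\delta_t \le 0$ with equality iff the task is complete. Given that the purpose of this corollary is to motivate discarding the value function (any nonzero $\gamma$-gap injects a spurious negative pull on the advantage that penalizes actions regardless of their true quality), the cleanest framing is to present the exact identity $\delta_t = -(1-\gamma)(1-p_t)$ first and then observe that it is strictly negative throughout the interior of an episode, which both proves the corollary and explains why the $\gamma = 1$, value-as-remaining-progress choice in the preceding proposition is the natural one.
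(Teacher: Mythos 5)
Your proof is correct and follows exactly the paper's route: substitute into the TD residual and factor to get $\delta_t = (\gamma-1)(1-p_t)$, then read off the sign. Your additional observation that the strict inequality fails at $p_t=1$ is a fair point of precision --- the paper's proof simply asserts $1-p_t>0$ even though the hypothesis only gives $p_t\in[0,1]$ --- but this does not change the substance of the argument.
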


\begin{proof}
Substituting $r_t = p_t - p_{t-1}$ and $V(s_t)=1-p_{t-1}$ into the TD residual, we get
\begin{equation}
\vspace{-0.2em}
\scalebox{0.85}{$
\begin{aligned}
\delta_t 
&= r_t + \gamma V(s_{t+1}) - V(s_t) \\
&= (p_t-p_{t-1}) + \gamma(1-p_t) - (1-p_{t-1})\\
&= (\gamma-1)(1-p_t).
\end{aligned}
$}
\vspace{-0.2em}
\end{equation}
Since $0<\gamma<1$ and $1-p_t>0$, this implies $\gamma-1<0$ and thus $\delta_t<0$.
\end{proof}

\begin{lemma}[One-step collapse of GAE]
\label{lemma:1}
Let the reward be defined as the progress difference
\begin{equation}
\vspace{-0.2em}
\scalebox{0.85}{$
r_t = p_t - p_{t-1},
$}
\vspace{-0.2em}
\end{equation}
and let the value function estimator be \(V(s)\).
Then, for GAE:
\begin{equation}
\vspace{-0.2em}
\scalebox{0.85}{$
\hat{A}_t = \sum_{l=0}^{\infty} (\gamma\lambda)^l \, \delta_{t+l}, 
$}
\vspace{-0.2em}
\end{equation}

\begin{equation}
\vspace{-0.2em}
\scalebox{0.85}{$
\delta_t 
= r_t + \gamma V(s_{t+1}) - V(s_t),
$}
\end{equation}
the following statements hold:\\
(a). If $\lambda = 0$, then $A_t = \delta_t$.\\
(b). If $\gamma=0$, then $A_t = \delta_t = r_t - V(s_t)$. In part-\\
icular, if $V(s) \equiv 0$, then $A_t = r_t$. 
\end{lemma}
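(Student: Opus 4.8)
The plan is to treat both parts as direct simplifications of the GAE series $\hat{A}_t = \sum_{l=0}^{\infty} (\gamma\lambda)^l \, \delta_{t+l}$, observing that each hypothesis forces the geometric weight $(\gamma\lambda)^l$ to vanish for every index $l \geq 1$ while leaving the leading $l=0$ term intact. The single fact I would lean on throughout is that $(\gamma\lambda)^0 = 1$, so the $l=0$ summand always contributes exactly $\delta_t$ regardless of the numerical value of $\gamma\lambda$.

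For part (a), I would substitute $\lambda = 0$ into the weight, giving $(\gamma\lambda)^l = 0$ for all $l \geq 1$, while the $l=0$ term remains $\delta_t$. Hence the tail of the series annihilates and $\hat{A}_t = \delta_t$. The point worth stating explicitly here is that this collapse holds for \emph{any} $\gamma$, since $\lambda$ alone kills the tail; no condition on $\gamma$ is needed.

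For part (b), I would proceed in two steps. First, setting $\gamma = 0$ again makes $(\gamma\lambda)^l = 0$ for $l \geq 1$, so by the same reasoning as in (a) the series collapses to $\hat{A}_t = \delta_t$. Second, I would substitute $\gamma = 0$ directly into the TD residual, $\delta_t = r_t + \gamma V(s_{t+1}) - V(s_t)$, so that the discounted bootstrap term $\gamma V(s_{t+1})$ drops out and $\delta_t = r_t - V(s_t)$. Combining the two yields $\hat{A}_t = r_t - V(s_t)$. The final specialization $V(s) \equiv 0$ then immediately gives $\hat{A}_t = r_t$, recovering exactly the one-step advantage of Eq.~\ref{eq:new_advantage}.

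There is no analytic obstacle here; the argument is a routine substitution into Eq.~\ref{eq:GAE}. The only subtlety I would flag rather than gloss over is the boundary convention $(\gamma\lambda)^0 = 1$ (equivalently $0^0 = 1$) that keeps the $l=0$ term alive when $\gamma\lambda = 0$ --- without it the statement would collapse vacuously to zero. Accordingly, the main care in the write-up is presenting the two hypotheses symmetrically and emphasizing that (b) strengthens (a) by additionally simplifying $\delta_t$ through the vanishing of the bootstrap term.
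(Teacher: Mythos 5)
Your proposal is correct and follows essentially the same route as the paper's proof: in both parts the geometric weight $(\gamma\lambda)^l$ kills every term with $l\ge 1$, leaving $\hat{A}_t=\delta_t$, and in part (b) the substitution $\gamma=0$ further reduces $\delta_t$ to $r_t - V(s_t)$, hence to $r_t$ when $V\equiv 0$. Your explicit remark about the convention $(\gamma\lambda)^0=1$ is a small clarification the paper leaves implicit, but it does not constitute a different argument.
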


The proof is provided in Appendix~\S\ref{appendix:proof_lemma}.

\section{Experiment}
\label{sec:exp}
In this section, we present a comprehensive evaluation of our proposed method through a series of unseen robotic tasks. We detail the task setups, outline the implementation specifics, and compare our approach against baselines.
More experimental details are provided in Appendix \S\ref{appendix:tasks_detail}-\ref{appendix:moreevaluation}.

\begin{table*}[t]
\centering
\scriptsize
\setlength{\tabcolsep}{1.5pt}

\begin{tabularx}{\textwidth}{l||YYY|Y||YYYYY|Y}
\hline \thickhline
\rowcolor{mygray}
&\multicolumn{4}{c||}{\textit{Execution}} &\multicolumn{6}{c}{\textit{Vision}}\\
\rowcolor{mygray}
\multirow{-2}{*}{Model}
& \textbf{Obj. Pos.} & \textbf{Robot Pose} & \textbf{Obj. Rep.}& \textbf{Avg.}
& \textbf{Table} & \textbf{Texture-w} & \textbf{Noise-w} & \textbf{Texture-s} & \textbf{Noise-s} & \textbf{Avg.}\\
\hline\hline
Nora \tiny~\cite{hung2025nora}
& 23.75\% & 10.83\% & 7.50\%& 14.03\%
& 39.72\% & 32.50\% & 36.67\% & 19.58\% & 22.92\% & 29.92\%\\
Nora + TT-VLA
& 25.00\% & 12.50\% & 10.83\%&16.11\%
& 44.58\% & 34.58\% & 41.67\% & 20.83\% & 27.08\% &33.75\%\\
$\Delta$
& +1.25\% & +1.67\% & +3.33\%&2.08\%
& +6.66\% & +2.08\% & +5.00\% & +1.25\% & +4.16\% &+3.83\%\\
\rowcolor{mygray}
Relative Gain& \textcolor{ForestGreen}{$\uparrow$ 5.26\%} & \textcolor{ForestGreen}{$\uparrow$ 15.42\%}
& \textcolor{ForestGreen}{$\uparrow$ 44.40\%} & \textcolor{ForestGreen}{$\uparrow$ 14.85\%}
& \textcolor{ForestGreen}{$\uparrow$ 17.56\%}
& \textcolor{ForestGreen}{$\uparrow$ 6.40\%} & \textcolor{ForestGreen}{$\uparrow$ 13.64\%}
& \textcolor{ForestGreen}{$\uparrow$ 6.38\%} & \textcolor{ForestGreen}{$\uparrow$ 18.15\%} &\textcolor{ForestGreen}{$\uparrow$ 12.80\%}\\
\hline
OpenVLA \tiny~\cite{kim24openvla}
& 31.67\% & 41.25\% & 36.25\%&36.39\%
& 54.85\% & 45.42\% & 40.83\% & 28.33\% & 30.00\% & 39.83\%\\
OpenVLA + TT-VLA
& 34.58\% & 42.08\% & 42.92\%&39.83\%
& 57.08\% & 47.08\% & 42.92\% & 31.25\% & 31.33\% &41.93\%\\
$\Delta$
& +2.92\% & +0.83\% & +6.67\%&+3.45\%
& +2.50\% & +1.67\% & +2.08\% & +2.92\% & +1.33\% &+2.10\%\\
\rowcolor{mygray}
Relative Gain& \textcolor{ForestGreen}{$\uparrow$ 9.21\%} & \textcolor{ForestGreen}{$\uparrow$ 2.02\%}
& \textcolor{ForestGreen}{$\uparrow$ 18.40\%} &\textcolor{ForestGreen}{$\uparrow$ 9.54\%}
& \textcolor{ForestGreen}{$\uparrow$ 4.58\%}
& \textcolor{ForestGreen}{$\uparrow$ 3.67\%} & \textcolor{ForestGreen}{$\uparrow$ 5.10\%}
& \textcolor{ForestGreen}{$\uparrow$ 10.29\%} & \textcolor{ForestGreen}{$\uparrow$ 4.43\%} &\textcolor{ForestGreen}{$\uparrow$ 5.27\%}\\
\hline
OpenVLA-RL \tiny~\cite{liu2025can}
&82.08\%  &81.25\%  &81.25\% & 81.53\%
& 87.08\% &85.00\%  &85.83\%  & 64.17\% & 69.17\% & 78.25\%\\
OpenVLA-RL + TT-VLA
& 84.17\% &82.08\%  & 86.25\% &84.17\%
& 90.00\% & 86.25\% & 85.83\% & 65.83\% & 72.08\% &80.00\%\\
$\Delta$
& +2.09\% &+0.83\%  & +5.00\%&+2.64\%
& +2.92\% & +1.25\% & +0.00\% & +1.66\% & +2.91\% &+1.75\%\\
\rowcolor{mygray}
Relative Gain&\textcolor{ForestGreen}{$\uparrow$ 2.54\%}   &\textcolor{ForestGreen}{$\uparrow$ 1.02\%}  & \textcolor{ForestGreen}{$\uparrow$ 6.15\%}&\textcolor{ForestGreen}{$\uparrow$ 3.24\%}
& \textcolor{ForestGreen}{$\uparrow$ 2.08\%} &\textcolor{ForestGreen}{$\uparrow$ 1.47\%}  &$\uparrow$ 0.00\% & \textcolor{ForestGreen}{$\uparrow$ 2.59\%} & \textcolor{ForestGreen}{$\uparrow$ 4.21\%} &\textcolor{ForestGreen}{$\uparrow$ 2.23\%}\\
\hline

TraceVLA \tiny~\cite{zheng2024tracevla}
&55.00\%  &18.75\%  &7.08\% &26.94\%
&71.67\%  &67.08\%  &67.08\%  &45.83\%  &45.83\%  &59.50\%\\
TraceVLA + TT-VLA
&57.92\%  &21.50\% & 7.50\%&28.97\%
&72.92\%  &67.50\%  &67.92\%  &46.25\%  &47.08\%  &60.33\%\\
$\Delta$
&+2.92 \% &+2.75\%  &+0.42\% &+2.03\%
&+1.25\%  &+0.42\%  &+0.84\%  &+0.42\%  &+1.25\%  &+0.84\%\\
\rowcolor{mygray}
Relative Gain&\textcolor{ForestGreen}{$\uparrow$5.31\%}  &\textcolor{ForestGreen}{$\uparrow$14.67\%}  &\textcolor{ForestGreen}{$\uparrow$5.93\%} &\textcolor{ForestGreen}{$\uparrow$7.53\%}
&\textcolor{ForestGreen}{$\uparrow$1.47\%}  &\textcolor{ForestGreen}{$\uparrow$0.63\%}  &\textcolor{ForestGreen}{$\uparrow$1.25\%}  &\textcolor{ForestGreen}{$\uparrow$0.92\%}  &\textcolor{ForestGreen}{$\uparrow$2.73\%}  &\textcolor{ForestGreen}{$\uparrow$1.41\%}\\
\hline
\end{tabularx}

\vspace{3pt}
\setlength{\tabcolsep}{1.5pt}
\begin{tabularx}{\textwidth}{l||YYYYYYY|Y}
\hline \thickhline
\rowcolor{mygray}
&\multicolumn{8}{c}{\textit{Semantics}}\\
\rowcolor{mygray}
\multirow{-2}{*}{Model}
& \textbf{M-Obj. OOD} & \textbf{Instruct} & \textbf{M Recep.} & \textbf{Recep.}
& \textbf{Dist Recep.}  & \textbf{Object} & \textbf{M-Obj. IND}&\textbf{Avg.}\\ 
\hline\hline
Nora \tiny~\cite{hung2025nora}
& 10.00\% & 39.85\% & 16.67\% & 28.33\%
& 26.25\% & 17.08\% & 27.08\%  & 23.57\%\\ 
Nora + TT-VLA
& 11.25\% & 42.50\% & 18.75\% & 30.83\%
& 30.00\% & 18.33\% & 27.08\%   &25.53\%\\ 
$\Delta$
& +1.25\% & +2.92\% & +2.08\% & +2.50\%
& +3.75\% & +1.25\% & +0.00\%   &+1.96\%\\ 
\rowcolor{mygray}
Relative Gain& \textcolor{ForestGreen}{$\uparrow$ 12.50\%} & \textcolor{ForestGreen}{$\uparrow$ 7.38\%}
& \textcolor{ForestGreen}{$\uparrow$ 12.48\%} & \textcolor{ForestGreen}{$\uparrow$ 8.82\%}
& \textcolor{ForestGreen}{$\uparrow$ 14.29\%} & \textcolor{ForestGreen}{$\uparrow$ 7.32\%}
& $\uparrow$ 0.00\% &\textcolor{ForestGreen}{$\uparrow$ 8.33\%} \\   
\hline

OpenVLA \tiny~\cite{kim24openvla}
& 28.75\% & 49.58\% & 20.42\% & 33.33\%
& 43.75\% & 45.00\% & 49.58\%  &38.63\%\\ 
OpenVLA + TT-VLA
& 32.05\% & 50.17\% & 29.58\% & 37.50\%
& 45.00\% & 46.25\% & 50.00\%  &41.51\%\\  
$\Delta$
& +3.30\% & +0.28\% & +9.17\% & +4.17\%
& +1.25\% & +1.25\% & +0.42\%  &+2.88\%\\ 
\rowcolor{mygray}& \textcolor{ForestGreen}{$\uparrow$ 11.48\%} & \textcolor{ForestGreen}{$\uparrow$ 1.18\%}
& \textcolor{ForestGreen}{$\uparrow$ 44.90\%} & \textcolor{ForestGreen}{$\uparrow$ 12.50\%}
& \textcolor{ForestGreen}{$\uparrow$ 2.86\%} & \textcolor{ForestGreen}{$\uparrow$ 2.78\%}
& \textcolor{ForestGreen}{$\uparrow$ 0.85\%} &\textcolor{ForestGreen}{$\uparrow$ 7.54\%}\\  
\hline
OpenVLA-RL \tiny~\cite{liu2025can}
& 62.50\% &86.67\%  & 60.00\% &79.58\% 
& 80.42\% & 77.50\% & 77.50\%  &74.88\%\\    
OpenVLA-RL + TT-VLA
& 65.00\% & 90.00\% & 61.25\% & 79.58\%
& 80.83\% &78.33\%  & 80.00\%   &76.43\%\\   
$\Delta$
& +2.50\% &+3.33\%  & +1.25\% & +0.00\%
& +0.41\% &+0.83\%  & +2.50\%   &+1.55\% \\   
\rowcolor{mygray}
Relative Gain& \textcolor{ForestGreen}{$\uparrow$ 4.00\%} & \textcolor{ForestGreen}{$\uparrow$ 3.84\%}  & \textcolor{ForestGreen}{$\uparrow$ 2.08\%}
& $\uparrow$ 0.00\% & \textcolor{ForestGreen}{$\uparrow$ 0.51\%}  &\textcolor{ForestGreen}{$\uparrow$ 1.07\%}  &\textcolor{ForestGreen}{$\uparrow$ 3.23\%}    &\textcolor{ForestGreen}{$\uparrow$ 2.06\%}\\ 
\hline

TraceVLA \tiny~\cite{zheng2024tracevla}
&22.50\%  &59.17\%  &27.92\%  &47.50\% 
&55.83\%  &45.00\%  &57.92\%  &45.12\%   \\
TraceVLA + TT-VLA
&25.00\%  &60.00\%  &28.33\%  &51.25\% 
&55.83\%  &47.08\%  &60.00\%  &46.78\%   \\
$\Delta$
&+2.50\%  &+0.83\%  &+0.41\%  &+3.75\% 
&+0.00\%  &+2.08\%  &+2.08\%  &+1.66\%   \\
\rowcolor{mygray}
Relative Gain&\textcolor{ForestGreen}{$\uparrow$11.11\%}  &\textcolor{ForestGreen}{$\uparrow$1.40\%}  &\textcolor{ForestGreen}{$\uparrow$1.47\%} &\textcolor{ForestGreen}{$\uparrow$7.89\%}  
&$\uparrow$0.00\%  &\textcolor{ForestGreen}{$\uparrow$4.62\%}  &\textcolor{ForestGreen}{$\uparrow$3.59\%}  &\textcolor{ForestGreen}{$\uparrow$3.69\%}  \\
\hline
\end{tabularx}
\vspace{-6pt}
\caption{\textbf{Main results on unseen simulation tasks.} We report success rates (\%) across three generalization dimensions: \textit{Execution}, \textit{Vision}, and \textit{Semantics} on 4 state-of-the-art open-source VLAs (\ie, Nora, OpenVLA, OpenVLA-RL, and TraceVLA). $\Delta$ denotes the absolute improvement, and \textcolor{ForestGreen}{$\uparrow$} indicates relative gains. As shown in the table, across all baselines and task categories, TT-VLA consistently improves performance during test time, demonstrating substantial generalizability and constituting a novel angle for VLA adaptivity.}
\label{table:main_result}
\vspace{-10pt}
\end{table*}
\subsection{Experimental Setup}\label{subsec:exp-set}
\noindent\textbf{Environment and Task Settings.}
As stated in \S\ref{subsec:lr:ga}, 
TT-VLA aims to address the inherent vulnerability of current VLAs to unanticipated dynamics and domain shifts.
To evaluate this generalization capability on \textit{\textbf{unseen tasks}}, we test TT-VLA on both simulated and real-world tasks. 

For \textbf{\textit{simulation experiments}} (see \S\ref{subsec:sim-result}), 
we follow RL4VLA's~\cite{liu2025can} setup, focusing on a standard pick-and-place manipulation scenario. The agent receives an RGB observation and a natural-language instruction, and outputs a Cartesian end-effector delta along with a binary gripper command.
Specifically, as in~\cite{liu2025can}, we evaluate generalization along three dimensions: \textit{Execution}, \textit{Vision}, and \textit{Semantics}.
For \textit{Execution}, the initial poses of the robot, object, and receptacle are randomized, and an additional mid-episode object repositioning condition is introduced to evaluate robustness to dynamic variations during execution. 
For \textit{Vision}, both foreground and background appearances are varied through dynamic textures, unseen table surfaces, and image-level noise. 
For \textit{Semantics}, unseen objects, receptacles, and instruction paraphrases are introduced, along with multi-object, multi-receptacle, and distractor-receptacle tasks designed to assess compositional and linguistic generalization. Detailed task specifications are provided in Appendix~\S\ref{appendix:tasks_detail}.
All simulation experiments are conducted in ManiSkill 3 \cite{tao2024maniskill3} using a WidowX-250S robotic arm.

For \textbf{\textit{real-world evaluation}} (see \S\ref{subsec:real-world}), 
we study pick-and-place manipulation tasks on a Franka Research 3 platform. The agent similarly receives an RGB image and a task instruction, and outputs a Cartesian end-effector displacement together with a binary gripper command. We evaluate performance on nine unseen tasks designed to assess robustness to executional, visual, and semantic shifts.

\noindent\textbf{Implementation Details.}
For simulation, each task is executed for 80 trials across three random seeds using a $640\times 480$ RGB image as input.
For real-world experiments, each task is evaluated over 10 trials under consistent experimental conditions, including fixed camera viewpoints with an image resolution of $500\times 480$, controlled lighting, and static backgrounds. 
The policy is fine-tuned using LoRA~\cite{hu2022lora} with ranks $\{16, 32\}$. Learning rates are chosen from $\{1\times10^{-5}, 5\times10^{-5}, 1\times10^{-4}\}$, optimized with AdamW. A clip parameter $\epsilon$ of 0.2 is applied to enhance training stability. 
Each episode is executed with a fixed horizon of 160 steps.

\subsection{Simulation Results}\label{subsec:sim-result}
\noindent \textbf{Baselines.} We benchmark our proposed method against several state-of-the-art open-source VLA models, which span diverse architectural designs and training paradigms:
\begin{itemize}[noitemsep, left=1pt]
\item \textbf{Nora} \cite{hung2025nora} adopts Qwen-2.5-VL-3B~\cite{bai2025qwen2} as its backbone and employs the FAST+ tokenizer~\cite{pertsch2025fast} to enable efficient action sequence generation. Following~\cite{liu2025can} to ensure a strong initial policy, we further fine-tune Nora for 50k steps on a self-collected ManiSkill 3 dataset. 
\item \textbf{OpenVLA} \cite{kim24openvla} is one of the most widely used open-source VLA models. It is built on Llama-2-7B~\cite{touvron2023llama}. Consistent with~\cite{liu2025can}, we apply a warm-up fine-tuning phase of 10k steps prior to evaluation.
\item \textbf{OpenVLA-RL}~\cite{liu2025can} extends OpenVLA via reinforcement learning during training, enabling further task-specific policy refinement beyond supervised pre-training.
\item \textbf{TraceVLA}~\cite{zheng2024tracevla} is designed to enhance spatio-temporal reasoning through visual trace prompting. By encoding state–action histories as visual prompts, it better captures long-horizon dependencies and improves manipulation performance in interactive environments.
\end{itemize}
\noindent \textbf{Overall Performance.} 
As shown in Table~\ref{table:main_result}, our proposed method \textbf{consistently improves} the performance of \textbf{all} representative baseline models across a range of unseen tasks, demonstrating its broad applicability. For example, when applied to Nora, our method achieves gains on \textbf{14 out of 15} tasks, with relative improvements ranging from \textbf{5.26\% to 44.4\%}. The largest improvements are observed on Task Obj. Rep. ($ 44.4\%$) and Task Noise-s ($18.15\%$). Similarly, on OpenVLA, our method yields consistent performance improvements across nearly all tasks, including several large-margin gains (\eg, 44.9\% on Task M Recep. and 18.4\% on Obj. Rep.). 
Overall, while current methods generally focus on training-time sophisticated architectural improvements, we demonstrate that substantial generalizability across diverse baselines and unseen tasks can be achieved through a markedly more streamlined yet robust test-time adjustment. Given its capacity for dynamic adjustment based on unseen samples, our approach constitutes a significantly novel solution for VLA adaptivity~\cite{kachaev2025don, liu2025can}.

\begin{figure}[t!]
    \centering
    \includegraphics[width=0.47\textwidth,
    trim=0.7cm 0.5cm 1.8cm 0cm, clip]{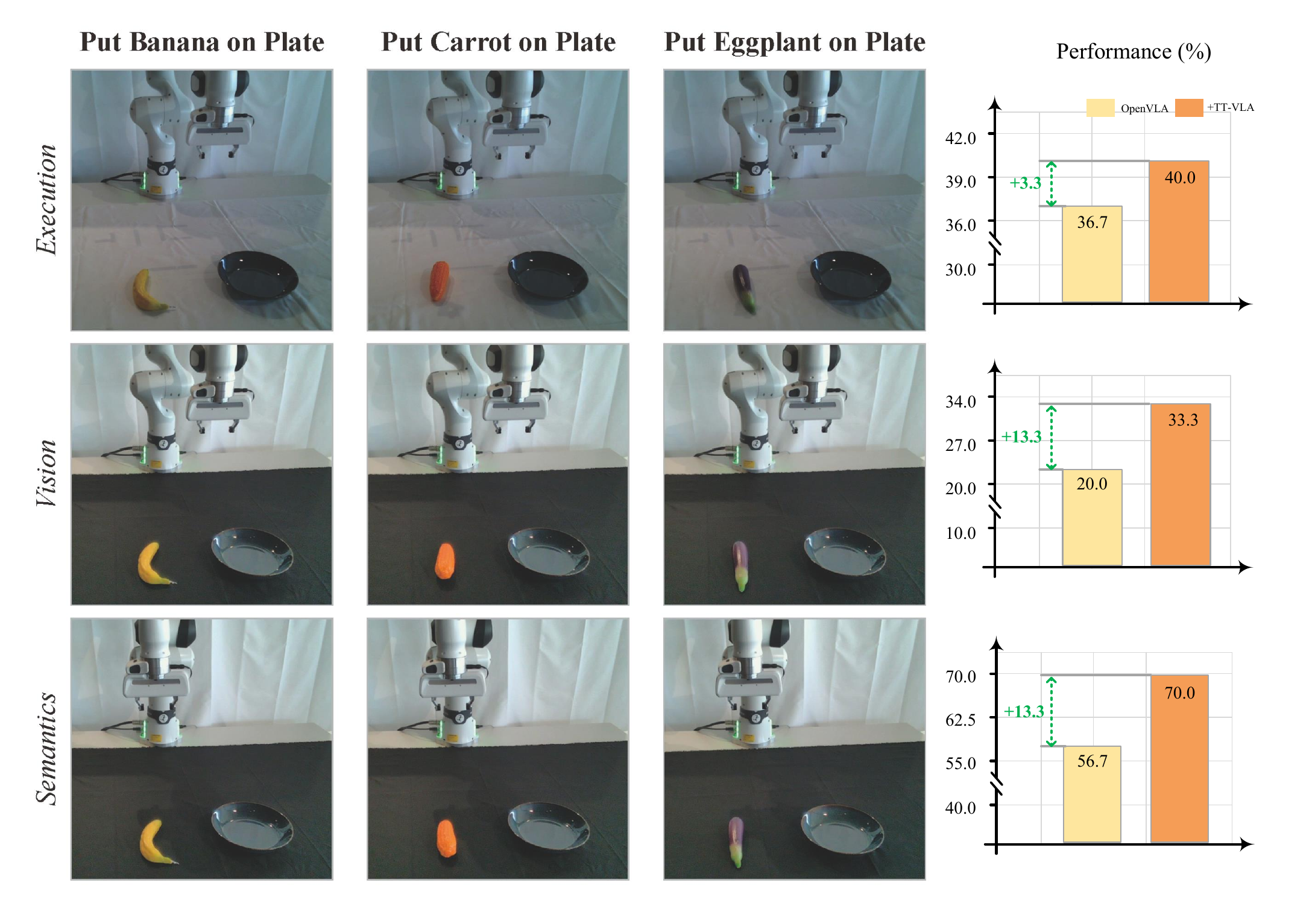}
    \vspace{-6pt}
    \caption{\textbf{Real-world setup and evaluation.} We evaluate nine real-world pick-and-place tasks covering \textit{Execution}, \textit{Vision}, and \textit{Semantics} generalization, with three tasks per category. The results show that TT-VLA consistently improves performance over baseline VLA models in real-world settings.}
    \vspace{-10pt}
    \label{fig:realtasks}
\end{figure}

\subsection{Real-World Results}\label{subsec:real-world}
We use OpenVLA as the base policy, and evaluate TT-VLA on nine unseen real-world tasks (see Fig.~\ref{fig:realtasks}). 
As seen, our method consistently improves performance over OpenVLA in real-world settings, demonstrating effective generalization beyond simulation and highlighting the practicality of test-time adaptation in real robotic deployments.

We further report
a representative case study ``put banana on plate'' in Fig.~\ref{fig:realtask_case_study}, where the robot grasps the banana and moves it toward the plate. During the original execution, the gripper temporarily deviates from 
the target region and moves away from the plate, signifying a substantial risk of task failure. 
However, by leveraging the dense, progress-based reward of TT-VLA, the policy enables rapid detection of task regression and on-the-fly behavioral adjustments.
The immediate reward feedback allows the VLA policy to correct deviations and realign with the task objective, ultimately completing the placement successfully. 
This example highlights the advantage of instantaneous, progress-aware rewards in enabling rapid recovery from execution errors. 
More real-world qualitative examples are shown in Appendix~\S\ref{appendix:moreevaluation}.

\begin{figure}[t!]
    \centering
    \vspace{-6pt}
    \includegraphics[width=0.47\textwidth, 
    trim=0.7cm 9.5cm 1.6cm 0cm, clip]{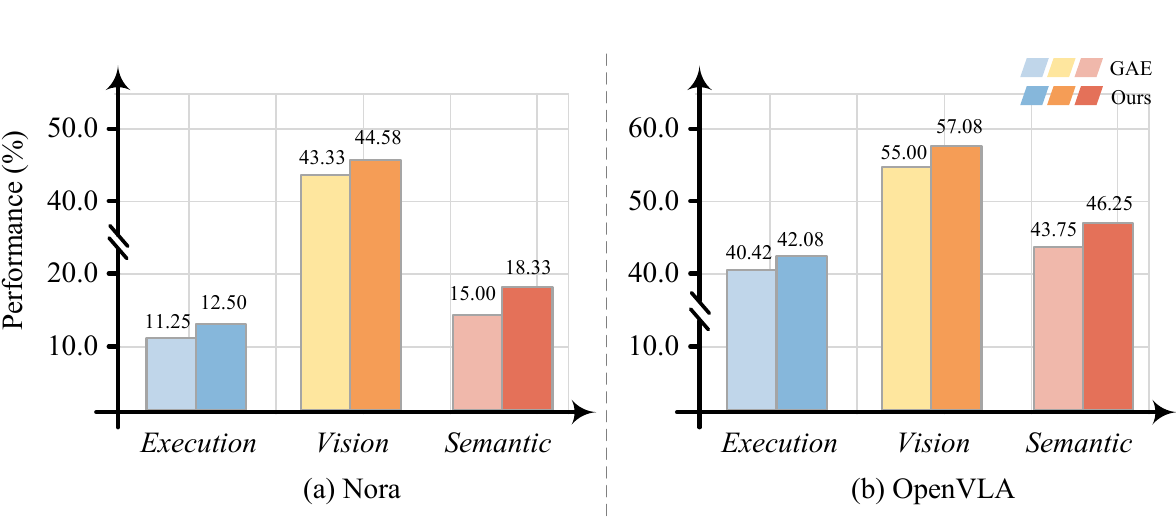}
    \vspace{-6pt}
    \caption{\textbf{Impact of reward design}. 
    The results show that our progress-based reward consistently outperforms the standard GAE across tasks and models.}
    \vspace{-10pt}
    \label{fig:ablationstudy}
\end{figure}

\begin{table}[b]
\centering
\vspace{-10pt}
\centering
\resizebox{0.45\textwidth}{!}{
\begin{tabular}{l||rr|rr|rr}
\hline \thickhline
\rowcolor{mygray}
Learning& \multicolumn{2}{c|}{\textit{Execution}} & \multicolumn{2}{c|}{\textit{Vision}} & \multicolumn{2}{c}{\textit{Semantics}}\\
\rowcolor{mygray}
Step&Nora&OpenVLA&Nora&OpenVLA&Nora&OpenVLA\\
\hline\hline
1
&10.83\%  &40.42\% &40.00\% &54.12\%  & 15.42\% &43.33\% \\
4
&11.25\%  &41.25\%  &42.08\% &56.25\%  & 17.50\% &\textbf{46.25\%} \\
8
&\textbf{12.50\%}  &\textbf{42.08\%}  &\textbf{44.58\%} &\textbf{57.08\%}  &  \textbf{18.33\%}&\textbf{46.25\%} \\
16
&11.25\%  &\textbf{42.08\%}  &43.33\% &55.42\%  & 17.50\% &45.42\% \\
\hline
\end{tabular}
}
\vspace{-6pt}
\caption{\textbf{Impact of learning step}. An update interval of 8 steps yield the best performance.}
\label{table:ablation_result}
\end{table}

\subsection{Diagnostic Experiments}\label{subsec:diag}
We conduct an ablation study on both Nora and OpenVLA using three representative unseen tasks. 

\noindent \textbf{Reward/Advantage Design.} We first analyze the effect of discounting in GAE (see Eq.~\ref{eq:GAE}). 
Specifically, we compare the standard GAE setting with nonzero discount factor and trace parameter (\eg, $\gamma > 0$, $\lambda > 0$) against our variant in which both $\gamma$ and $\lambda$ are set to zero. 
By eliminating discounting and trace accumulation, TT-VLA emphasizes how each individual action contributes to immediate progress rather than estimating long-term returns. 

Empirically, focusing on immediate progress yields consistent improvements in performance. For example, as shown in Fig.~\ref{fig:ablationstudy}, on the \textit{Vision} task with OpenVLA, our setting achieves a success rate of 57.08\%, compared to 55.00\% when using standard GAE. We attribute this performance gain to the fact that long-horizon returns can be unreliable in this setting, occasionally assigning overly optimistic advantage signals to ineffective actions. These results suggest that instantaneous feedback can be more effective than incorporating discounted future rewards during test time.

\noindent \textbf{Test-Time Training Steps.} We then explore the impact of model update frequency in TT-VLA by varying the update interval over different environment steps (\ie, 1, 4, 8, and 16). The number of learning steps is designed to balance the trade-off between rapid adaptation to newly collected data and the overall stability of the optimization process. Table~\ref{table:ablation_result} shows that updating the model every 8 steps yields the best performance. More frequent updates (\eg, 1 step) 
can destabilize training and limit the effectiveness of each update. In contrast, less frequent updates (\eg, 16 steps) delay policy improvement and reduce learning efficiency. These findings suggest that an intermediate update frequency achieves a balance between rapid policy adaptation and stable optimization. Additional details are provided in Appendix \S\ref{appendix:morediagnosticexp}.

\begin{figure}[t!]
    \centering
    \vspace{-5pt}
    \includegraphics[width=0.47\textwidth]{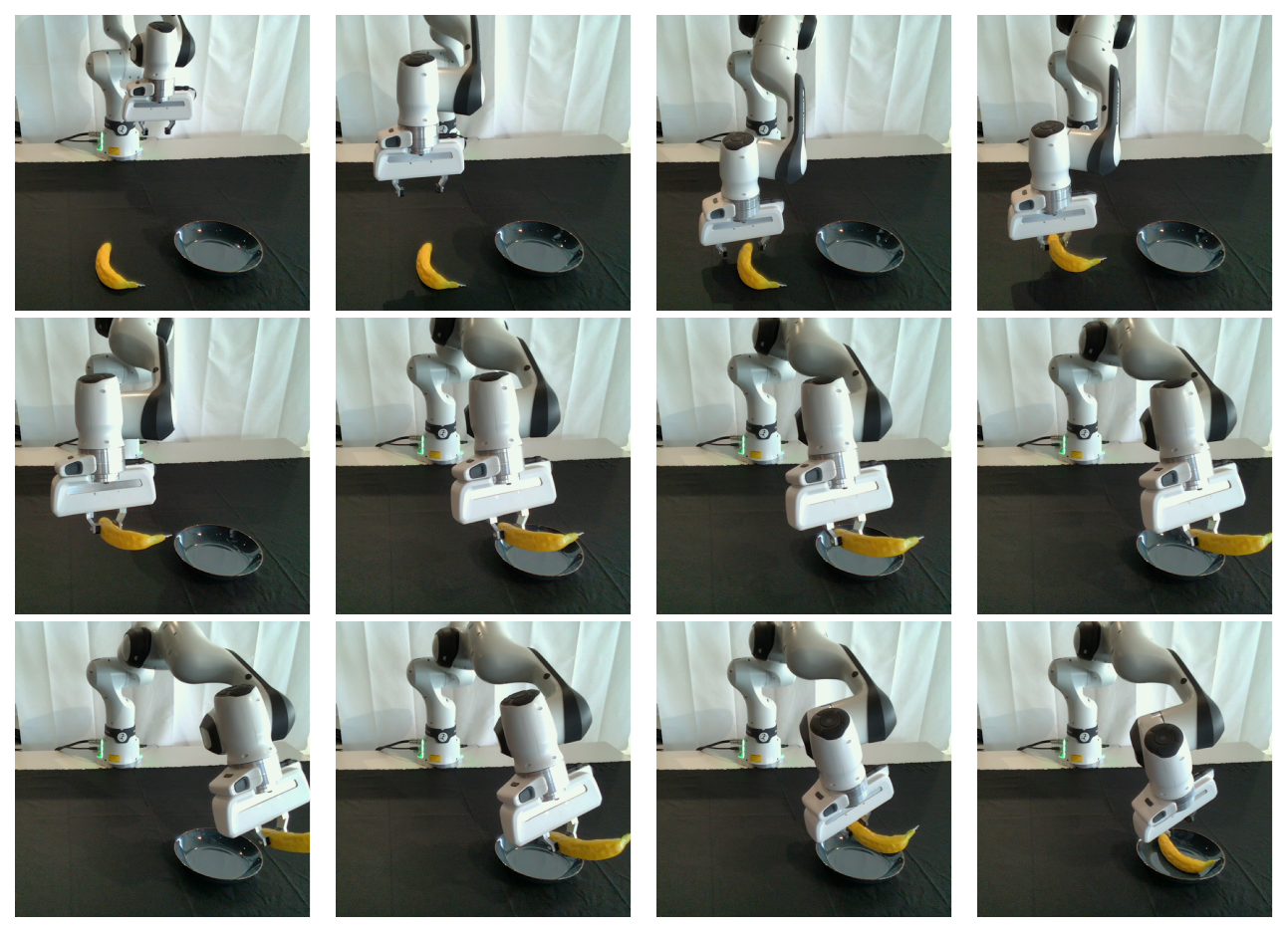}
    \vspace{-6pt}
    \caption{\textbf{Real-world case study} illustrates how TT-VLA’s instantaneous reward feedback enables rapid recovery from trajectory errors during deployment.}
    \vspace{-12pt}
    \label{fig:realtask_case_study}
\end{figure}

\subsection{Discussions on VLA Test-Time Training}\label{subsec:discussion}
As stated in \S\ref{appendix:subsec:TTL}, TTT was originally proposed for LLMs. 
A natural question is: \textit{Can test-time training methods in LLMs be directly applied to VLA models?} To address this question, we examine two representative approaches for VLAs: a self-supervised test-time training method, TLM~\cite{hu2025test}, and a test-time reinforcement learning method, TTRL~\cite{hu2025test}. Unless otherwise specified, we follow the same experimental setup as the diagnostic experiments, using the same tasks and baseline models for evaluation.

We first consider TLM~\cite{hu2025test} that enables test-time adaptation by directly minimizing input perplexity without any external supervision. 
When applied to VLAs, TLM updates model parameters by optimizing the likelihood of test-time observation sequences. 
As shown in Table~\ref{table:discussion_result}, this strategy results in lower performance gains than TT-VLA. 
The reason is that,
unlike pure language tasks, VLA tasks involve complex interactions between perceptions, instruction understanding, and actions. Updates driven solely by input likelihood may overly emphasize representational consistency rather than task-oriented decision making. As a result, 
self-supervised test-time objectives cannot readily translate to the VLA domain.

We further compare TT-VLA with TTRL~\cite{hu2025test}, a recently proposed test-time reinforcement learning approach. TTRL performs test-time adaptation by sampling multiple candidate outputs for each input and constructing a consensus pseudo-label via majority voting~\cite{shao2024deepseekmath}. This pseudo-label is then used to construct rule-based rewards, where outputs that match/mismatch the pseudo-label receive positive/zero rewards.
As reported in Table~\ref{table:discussion_result}, TTRL underperforms our proposed TT-VLA, 
indicating that the consensus-based pseudo-label is less effective for VLA tasks. 
One possible reason is that
majority voting does not reflect action quality, and reward signals derived from output agreement fail to provide 
task-aligned learning signals, thereby limiting the effectiveness of VLA test-time updates. More details of TLM and TTRL are provided in Appendix \S\ref{appendix:moretttdiscussions}.

\begin{table}[t]
\centering
\vspace{-3pt}
\resizebox{0.45\textwidth}{!}{
\begin{tabular}{l||rr|rr|rr}
\hline \thickhline
\rowcolor{mygray}
& \multicolumn{2}{c|}{\textit{Execution}} & \multicolumn{2}{c|}{\textit{Vision}} & \multicolumn{2}{c}{\textit{Semantics}}\\
\rowcolor{mygray}
\multirow{-2}{*}{Model}
&Nora&OpenVLA&Nora&OpenVLA&Nora&OpenVLA\\
\hline\hline
TLM
&11.25\%  &40.42\%  &41.25\%  &52.50\%  &16.67\% &42.9\%\\
TTRL
&10.42\%  &40.83\%  &39.58\% &51.42\%  &16.25\%  &41.76\% \\
\rowcolor{mygray}
Ours
&\textbf{12.50\%}  &\textbf{42.08\%}  &\textbf{44.58\%} & \textbf{57.08\%} &\textbf{18.33\%}  &\textbf{46.25\%} \\
\hline
\end{tabular}}
\vspace{-4pt}
\caption{\textbf{Comparison of TTT methods}. We compare TT-VLA with TLM and TTRL (see \S\ref{subsec:discussion}). As seen, TT-VLA achieves superior performance, highlighting the importance of progress-based reward for effective test-test adaptation in VLAs.}
\label{table:discussion_result}
\vspace{-12pt}
\end{table}

\section{Conclusion}
\label{sec:conclusion}
While VLA models have gained significant popularity on closed-form benchmarks, this work focuses on the flexibility of applying these models in evolving environments via test-time reinforcement learning.
Experimental results demonstrate that TT-VLA consistently enhances performance on unseen tasks across diverse simulated and real-world scenarios, as well as across various VLA backbones. 
We believe that our framework provides pioneering and foundational contributions to VLA models to flexibly refine action policies 
under dynamic, previously unseen test-time cases.

% \newpage
%\input{sec/limitation}
\section*{Acknowledgments}
CL, YL, QZ, WY, DL, CH are not supported by any funds in this work.
The views and conclusions contained herein are those of the authors and should not be interpreted as necessarily representing the official policies or endorsements, either expressed or implied, of the U.S. Naval Research Laboratory (NRL) or the U.S. Government.

\bibliography{latex/custom}

\clearpage

\appendix
\renewcommand{\thesection}{S\arabic{section}}
\renewcommand{\thetable}{S\arabic{table}}
\renewcommand{\thefigure}{S\arabic{figure}}
\setcounter{table}{0}
\setcounter{figure}{0}
\setcounter{section}{0}
\centerline{\textbf{SUMMARY OF THE APPENDIX}} 
\vspace{0.5em}

This appendix contains additional experimental results and discussions of our work, organized as:
\begin{itemize}
\setlength{\itemsep}{0pt}
  \item \S\ref{appendix:MRL} provides \textbf{more related works} on VLA models
  
  \item \S\ref{appendix:proof_lemma} provides \textbf{Lemma proof}. 
  
  \item \S\ref{appendix:tasks_detail} includes \textbf{more details on tasks}.
  \item \S\ref{appendix:morediagnosticexp} supplies additional information on \textbf{diagnostic experiments}.
  \item \S\ref{appendix:moretttdiscussions} supplies additional discussions on \textbf{Test-Time Training}.
  \item \S\ref{appendix:moreevaluation} provides \textbf{more qualitative results}.
  \item \S\ref{appendix:grpo} adds discussions on the practicalness of using \textbf{Test-Time GRPO in VLAs}.
  \item \S\ref{appendix:sec:reproduce} includes the \textbf{reproducibility statement and pseudo code} of our method. 
  \item \S\ref{appendix:sec:technical_contributions} highlights the \textbf{technical contributions} of our method.
  \item \S\ref{appendix:License} offers a \textbf{summary of licenses and consent}, and lists usage terms for all models and datasets. 
  
  \item \S\ref{appendix:ethics} includes additional discussions on \textbf{ethics concerns}.
  \item \S\ref{appendix:Discussion} discusses on \textbf{future directions}, highlighting potential areas for further research.
  \item \S\ref{appendix:disclosure} provides an \textbf{AI disclosure}, and notes that AI assistance was limited to grammar checking. 
\end{itemize}

\section{More Related Works}
\label{appendix:MRL}

\subsection{More Discussions on VLA}
\label{appendix:subsec:VLA}

Recent studies~\cite{Brohan2023rt1, MeesHB2022What, Pong2020Skew} have advanced the potential of large-scale Vision Language Models (VLMs) as key enablers for generalist robots, demonstrating promising generalization across a variety of scenes~\cite{Brianna2023RT2, Jiang2022VIMA, Octo2024Octo, huang2023embodied, li2023vision, cui2024collaborative, wang2024large}. They generally achieve action planning via two main branches: I. Discretization-based approaches~\cite{kim24openvla, Brohan2023rt1, Brianna2023RT2}, which discretize the action space into a small set of action tokens; and II. Diffusion-based approaches~\cite{Cheng2023DP1, xian2023chaineddiffuser, Michael2023DP2, liang2023adaptdiffuser, ajayconditional}, which integrate diffusion heads for action prediction.

In our study, we focus on and generalize discretization-based approaches. The reason is that most diffusion-head VLA models adopt a separate action decoder, typically a latent diffusion process that maps visual and instruction embeddings to an action embedding stream, followed by a shallow MLP to regress the robot’s joint space~\cite{wen2025diffusionvla}. This design renders reinforcement-learning (RL) optimization impractical (\ie, also for diffusion large language model (DLLM) RL optimization~\cite{wang2025revolutionizing}) for three technical reasons: 
(i) the resulting policy is implicit and does not expose a tractable per-step log-likelihood ($\log \pi_\theta(a\mid s)$), precluding policy-gradient methods (\eg, REINFORCE~\cite{sutton1999policy}/PPO~\cite{schulman2017proximal}) and entropy regularization; 
(ii) action emission requires tens of denoising iterations per control step, creating an inner stochastic chain misaligned with environment time, which breaks step-wise credit assignment; and 
(iii) the diffusion noise-prediction objective is mismatched with return-based RL objectives, while the terminal MLP head is effectively deterministic, suppressing exploration. 
However, we notice a very recent paper dVLA~\cite{wen2025dvla} decodes actions as a discrete, autoregressive token sequence conditioned on state/instruction, making the current RL attempts possible to apply to diffusion-based approaches. While the code is not publicly available for us to evaluate, we highlight that our method can be naturally applied to this line of research.

\subsection{RL Methods for VLA}
\label{appendix:subsec:vla-policy}

As we mentioned in our study, recently, some efforts have attempted to apply RL to the VLA training stage, leaving the test-time adjustments underexplored. In light of this view, we aim to fill the last puzzle of on-the-fly policy adaptation. We list some research with high impact on the integration of RL on VLAs. 

GRAPE~\cite{zhang2024grape} uses Direct Preference Optimization (DPO)~\cite{rafailov2023direct} to train VLAs by integrating human preferences. ConRFT~\cite{chen2025conrft} introduces Reinforced Fine-Tuning~\cite{trung2024reft} to train VLAs in real-world environments, iteratively training VLAs
through alternating RL and SFT rounds. ReinboT~\cite{zhang2025reinbot} rises dense reward design and optimized VLA through reward maximization. iRe-VLA~\cite{guo2025improving} proposed an iterative training framework that combines SFT and RL stages to address training instability and computational overhead. 
RIPT-VLA~\cite{tan2025interactive} employs REINFORCE Leave-One-Out (RLOO)~\cite{ahmadian2024back} for VLA training. \cite{liu2025can} investigates RL’s impact on VLA generalization capabilities, demonstrating improvements
over SFT in unseen environments, objects, and textures. 
VLA-RL~\cite{lu2025vla} applies the PPO; TGRPO and SimpleVLA-RL~\cite{chen2025tgrpo, li2025simplevla} evaluate trajectories and optimize VLA with GRPO variants; RFTF~\cite{shu2025rftf} uses value models to generate dense rewards in embodied scenarios for VLA online RL; and SRPO~\cite{fei2025srpo} leverages a world model to generate progress-based dense rewards.
Though promising, it is important to note that current RL-based approaches all operate during training, while real-world deployments inevitably involve evolving conditions and distributional shifts at test time, necessitating VLAs capable of adaptive adjustment in response.
The approach most relevant to our work is EVOLVE-VLA \cite{bai2025evolve}, which utilizes task progress as a reward signal for reinforcement learning. However, we should notice that EVOLVE-VLA optimizes the policy using GRPO, which incurs substantial computational overhead and is therefore less suitable for real-time robotic deployment. This limitation becomes particularly pronounced in real-world robotic settings, where strict latency constraints are critical. 

\section{Lemma Proof}
\label{appendix:proof_lemma}
In this section, we provide the proof of Lemma~\ref{lemma:1}, which characterizes the relationship between GAE and the reward-only advantage used in TT-VLA. This result formally justifies our value-free test-time optimization objective.
\begin{proof}
(i) For (a): When \(\lambda=0\), the geometric weighting term \((\gamma\lambda)^l\) 
vanishes for all \(l>0\), Utilizing (14), it yields \(A_t=\delta_t\).\\
\noindent
(ii) For (b): When \(\gamma=0\), (14) and (15) respectively yields
\begin{equation}
\vspace{-0.2em}
\scalebox{0.85}{$
A_t=\delta_t,\; \delta_t = r_t - V(s_t), 
$}
\end{equation}
(16) implies that when $V(s)\equiv 0$, there holds
\begin{equation}
\vspace{-0.2em}
\scalebox{0.85}{$
A_t =  \delta_t = r_t , 
$}
\end{equation}
\noindent
which completes the proof.
\end{proof}

\section{Task Details}
\label{appendix:tasks_detail}
For simulation tasks, we follow \cite{liu2025can} to define three dimensions of generalization problems for unseen tasks, which are \textit{Execution}, \textit{Vision}, and \textit{Semantics}.

The training task setting: At the beginning of each episode, an object is sampled from the 16 training objects and a table appearance is sampled from the 16 training textures. The object and the receptacle (yellow plate) are placed on the table, with their positions uniformly randomized within a rectangular region. The language instruction follows the template “put $O$ on $R$”, where $O$ and $R$ denote the object and receptacle names, respectively.

\noindent \textbf{\textit{Execution}} explores changes in the initial positions of both the object, the receptacle, the robot initial pose, and mid-episode changes in the object’s position during task execution. 
\begin{itemize}
\item Unseen Object \& Receptacle Position (Obj. Pos.): The object and the receptacle are placed on the table, with their positions randomized within a larger square region that surrounds the original rectangular area. All other settings follow the Training setting.
\item Unseen Robot Init Pose (Robot Pose): At the start of each episode, the initial poses of all robots are randomized instead of being fixed as in the Training setting. All other settings remain identical to the Training setting.
\item Mid-Episode Object Reposition (Obj. Rep.): At the fifth timestep of each episode, the object is teleported to a new randomly sampled position on the table. All other settings remain identical to the Training setting.
\end{itemize}
\noindent \textbf{\textit{Vision}} includes both foreground and background changes, as well as image-level Dynamic Noise, applied with either weak or strong intensity.
\begin{itemize}
\item Unseen Table (Table): The table appearance is sampled from 5 unseen appearance.
\item Weak Dynamic Texture (Texture-w): In addition to sampling an object and a table appearance, a texture is selected from the 16 available textures at the start of each episode. This texture is cropped and resized at each timestep differently, and overlaid onto the object, receptacle, and robot arm with a transparency factor of 0.3.
\item Strong Dynamic Texture (Texture-s): The settings matches the Weak Dynamic Texture setting, except that the image mixing transparency is increased to 0.5.
\item Weak Dynamic Noise (Noise-w): In addition to sampling an object and a table appearance, a texture is selected from the 16 available textures at the start of each episode. The texture is cropped and resized at each timestep differently and overlaid over the entire image with a transparency factor of 0.3.
\item Strong Dynamic Noise (Noise-s): The settings matches the Weak Dynamic Noise setting, except that the image mixing transparency is increased to 0.5
\end{itemize}

\noindent \textbf{\textit{Semantics}} considers previously unseen variations in Objects, Receptacles, and Instruction Phrasings. 
\begin{itemize}
\item Unseen Objects (Object): The object is sampled from 9 unseen objects, while all other settings follow the Training setting.
\item Unseen Receptacles (Recep.): In addition to sampling an object and a table appearance, a receptacle is selected from 16 unseen receptacles at the start of each episode, replacing the default training receptacle (yellow plate). All other settings follow the Training setting
\item Unseen Instruction Phrasing (Instruct): In addition to sampling an object and a table appearance, a language instruction template is selected from 16 unseen templates (Same as \cite{liu2025can}) at the start of each episode, replacing the default instruction (“put $O$ on $R$”). All other settings follow the Training setting. 
\item Seen Multi-Object (M-Obj. (IND)): At the beginning of each episode, two distinct objects are sampled from the 16 training objects along with one of the 16 training table appearances. Both objects and the receptacle (yellow plate) are placed on the table, with their positions randomly initialized within a rectangular region.
\item Unseen Multi-Object (M-Obj. (OOD)): Two distinct objects are sampled from the nine unseen objects, with all other settings identical to the  Seen Multi-Object settings.
\item Distractive Receptacle (Dist Recep.): In addition to sampling an object and a table appearance, a distractor receptacle is selected from 16 unseen receptacles at the start of each episode and placed on the table without being used in the task. All other settings follow the Training setting.
\item Multi-Receptacle (M Recep.): At the beginning of each episode, an object is sampled from the 16 training objects, two distinct receptacles are sampled from the 16 unseen receptacles, and a table appearance is selected from the 16 training textures. The object and both receptacles are placed on the table, with their positions randomly initialized within a rectangular region.
\end{itemize}

For real-world evaluation, we assess our method on nine unseen manipulation tasks designed to test generalization across execution, vision, and semantic dimensions. The execution tasks consist of are “put banana on plate”, “put lemon on plate”, “put apple on plate” under different initial robot configurations, evaluating robustness to variations in starting states. The vision tasks use the same instructions but introduce different background appearances to assess visual generalization. The semantic tasks also follow the same instruction templates but involve an unseen plate at test time, evaluating the model’s ability to generalize to novel semantic contexts. The nine tasks are illustrated in Fig.~\ref{fig:realtasks}.

\section{Additional Details on Diagnostic Experiments }
\label{appendix:morediagnosticexp}
This section provides additional implementation details for the diagnostic experiments discussed in \S\ref{subsec:diag}. We conduct diagnostic experiments using Nora and OpenVLA. We evaluate one task from each dimension: \textit{execution}, \textit{vision}, and \textit{semantics}. Specifically, we use Task Robot Pose (\textit{execution}), Task Table (\textit{vision}), and Task Object (\textit{semantics}) for all ablations to ensure controlled and comparable evaluations across settings. 

For Advantage Design, the standard GAE baseline uses a discount factor of $\gamma = 0.99$ and a trace parameter of $\lambda = 0.95$, with a truncated horizon length of $l = 8$ for advantage estimation. In contrast, our method disables both discounting and trace accumulation by setting $\gamma = 0$ and $\lambda = 0$, yielding a one-step, reward-only advantage.

\section{Additional Details on Test-Time Training Discussions}
\label{appendix:moretttdiscussions}
This section provides implementation details for adapting TLM and TTRL to VLA models.

For TLM, we follow the original formulation and perform test-time adaptation by minimizing the perplexity of the instruction prompt. Concretely, given a task instruction, we optimize the model parameters to reduce the negative log-likelihood of the instruction tokens, without relying on external supervision or environment rewards. We set the loss weighting coefficient to $\lambda = 0.1$ and use a threshold value of $0$ for triggering updates. The policy is updated every 8 environment steps. We apply LoRA to update the policy, using a rank of 32 and a learning rate of $1\times10^{-4}$.

For TTRL, we adapt the consensus-based test-time reinforcement learning framework to the VLA setting. At each decision step, we sample multiple candidate action tokens from the model to construct a pseudo-label via majority voting. We set the voting group size to 8. The reward function is defined as a binary signal: a reward of 1 is assigned if the sampled action token matches the pseudo-label, and 0 otherwise. Policy updates are performed at every environment step to accommodate the step-wise nature of action execution in real-time settings. We employ LoRA to update the policy parameters, with a rank of 32 and a learning rate of $1\times10^{-4}$.

\section{Additional Real-world Qualitative Results}
\label{appendix:moreevaluation}
This section presents additional qualitative results from real-world scenarios, complementing results in \S\ref{subsec:real-world} and further demonstrating the effectiveness of TT-VLA. Fig.~\ref{fig:morequalitive} presents three real-world rollouts of the ``put banana on plate'' task using TT-VLA. In the first episode, the robot initially grasps the banana but places it at an incorrect location. It then re-grasps the banana and successfully places it onto the plate. In the second episode, the robot grasps the banana and moves it to a position behind the plate; the policy subsequently corrects its direction and completes the placement. Similarly, in the third episode, the robot initially moves past to the right side of the plate before adjusting its motion to place the banana correctly. These qualitative results demonstrate TT-VLA’s ability to recover from execution errors and handle real-world uncertainties without retraining or human intervention.

\begin{figure*}[t]
    \centering
    \includegraphics[width=0.98\textwidth]{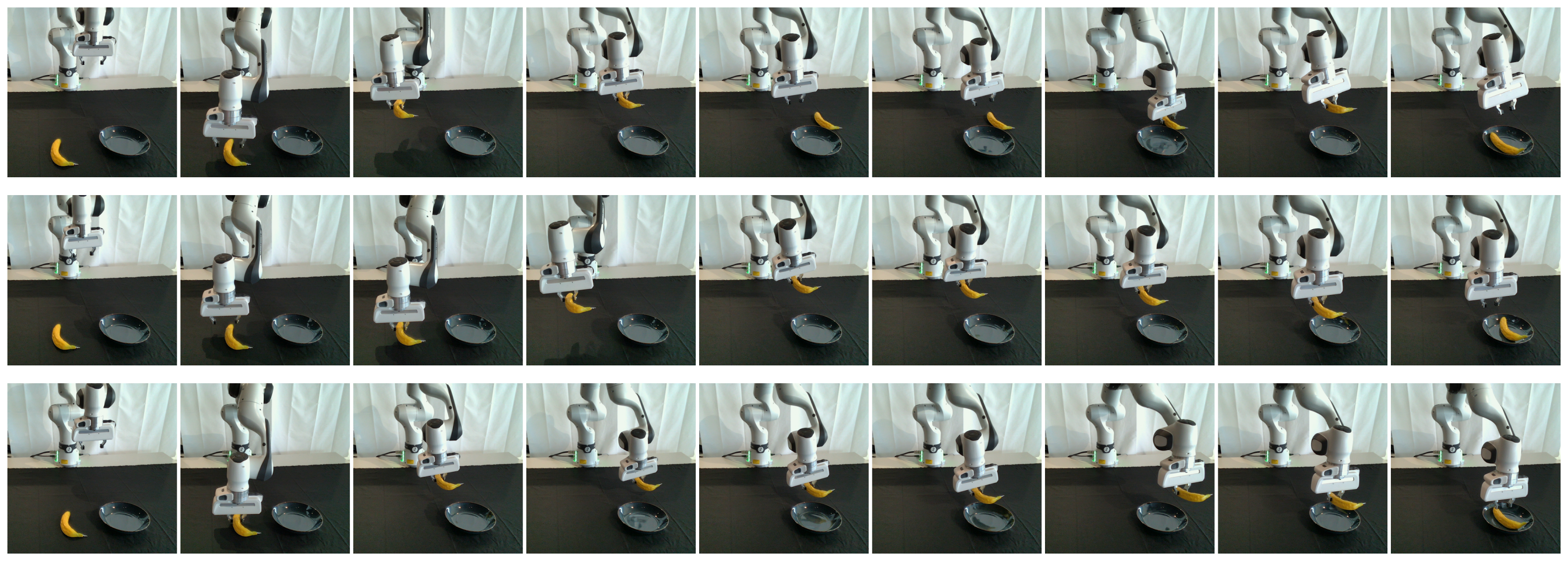}
    \vspace{-6pt}
    \caption{
\textbf{Additional real-world qualitative results.} Each row shows a real-world episode of the “put banana on plate” task, illustrating how TT-VLA adapts online to execution deviations and successfully completes the task using progress-based reward feedback.
}\vspace{-10pt}
    \label{fig:morequalitive}
\end{figure*}

\begin{algorithm}[t]
\caption{\textbf{TT-VLA  Pipeline}}
\label{alg:ttvla}
\KwIn{Pretrained VLA policy $\pi_\theta$, frozen progress estimator $\Phi(o_{0:t}, l)$, 
language instruction $l$, observation horizon $H$, update interval $K$, 
clipping threshold $\varepsilon$, learning rate $\eta$}
\KwOut{Task actions}

1: \hspace{1pt}\textbf{for} each episode \textbf{do}\\
2: \hspace{1pt}\hspace{8pt}Load pretrained VLA policy $\pi_\theta$; progress $p_{\text{0}} \leftarrow 0$; buffer $\mathcal{B} \leftarrow \emptyset$; Environment resets to initial state $s_0$\\
3: \hspace{1pt}\hspace{8pt}\textbf{for} each time step $t = 0, 1, 2, \dots ,T$ \textbf{do}\\
4:\hspace{8pt}\hspace{8pt}Sample $a_t \sim \pi_\theta(a_t \mid o_{t-1}, l)$, get $\log \pi_{\theta_{\text{old}}}(a_t \mid o_t)$, and execute $a_t$\\
5: \hspace{8pt}\hspace{8pt}Get new oberservation $o_{t+1}$\\
6: \hspace{8pt}\hspace{8pt}Compute $p_t \leftarrow \Phi(o_{0:t}, l)$ \hfill $\triangleright$ Eq.~\ref{eq:progress_estimation}\\
7: \hspace{8pt}\hspace{8pt}Compute $r_t \leftarrow p_t - p_{t-1}$ \hfill $\triangleright$ Eq.~\ref{eq:reward}\\
8: \hspace{8pt}\hspace{8pt}Store $(o_{t+1}, a_t, r_t, \log \pi_{\theta_{\text{old}}}(a_t \mid o_t))$ in $\mathcal{B}$\\
9: \hspace{8pt}\hspace{8pt}\textbf{if} $t \bmod K = 0$ \textbf{then}\\
10:\hspace{8pt}\hspace{8pt}\hspace{8pt}\textbf{for} each $(o_i, a_i, r_i, \log \pi_{\theta_{\text{old}}}(a_i \mid o_i)) \in \mathcal{B}$ \textbf{do}\\
11:\hspace{8pt}\hspace{8pt}\hspace{8pt}\hspace{8pt}Compute $r_i(\theta) \leftarrow \exp(\log \pi_\theta(a_i \mid o_i)$\\
\hspace{8pt}\hspace{8pt}\hspace{8pt}\hspace{8pt}\hspace{8pt}\hspace{8pt}$ - \log \pi_{\theta_{\text{old}}}(a_i \mid o_i))$    \\ 
12:\hspace{8pt}\hspace{8pt}\hspace{8pt}\hspace{8pt}Compute $L_i \leftarrow \min(r_i(\theta) \cdot r_i,\, $\\
\hspace{8pt}\hspace{8pt}\hspace{8pt}\hspace{8pt}\hspace{8pt}\hspace{8pt}$\text{clip}(r_i(\theta), 1-\varepsilon, 1+\varepsilon) \cdot r_i)$ \\\hfill $\triangleright$ Eq.~\ref{eq:PPO_clip}\\
13:\hspace{8pt}\hspace{8pt}\hspace{8pt}\hspace{8pt}Update policy parameters\\
\hspace{48pt}$\theta \leftarrow \theta + \eta \nabla_\theta \sum_i L_i$\\
14:\hspace{8pt}\hspace{8pt}Clear buffer $\mathcal{B}$\\
15:\hspace{8pt}\hspace{8pt}\textbf{end if}\\
16:\hspace{1pt} \textbf{end for}\\[4pt]
\end{algorithm}

\section{Discussions on Using Test-Time GRPO in VLAs}
\label{appendix:grpo}
In TT-VLA, we do not adopt Group Relative Policy Optimization (GRPO)~\cite{shao2024deepseekmath} due to \textit{two practical constraints} in test-time robotic deployment:
\begin{enumerate}
    \item GRPO relies on sampling multiple candidate trajectories or actions to update the policy, which introduces significant computational overhead and makes it inefficient for real-time test-time adaptation. Such sampling-based procedures are particularly unsuitable under test-time settings, where latency and responsiveness are critical.
    \item  In real-world robotic scenarios, sampled actions inevitably interact with the physical environment (\eg, touching or moving objects). It is thus infeasible to reset the environment to a previous state after each interaction. These constraints make GRPO-style sampling-based optimization impractical for test-time adaptation in physical environments. In fact, that is the practical reason that we redefine the advantage to depend only on the reward obtained from the current action (see \S\ref{eq:new_advantage}), as we want to prioritize rapid fitting of the current task rather than state accumulations.
\end{enumerate}

\section{Reproducibility}
\label{appendix:sec:reproduce}
TT-VLA is implemented in Pytorch~\cite{NEURIPS2019_9015}. Experiments are conducted on NVIDIA RTX 6000 Ada GPUs. 
To guarantee reproducibility, our full implementation shall be publicly released upon paper acceptance. 
We provide the pseudo code of TT-VLA in Algorithm~\ref{alg:ttvla}.

\section{Technical Contributions}
\label{appendix:sec:technical_contributions}

Our study presents three principal technical contributions: 
\begin{itemize}
    \item We introduce a test-time reinforcement learning framework for VLA models, enabling robots to adapt their policies on the fly during deployment without requiring retraining or environment resets. This capability directly addresses a key limitation of current VLA systems in real-world robotic settings, where conditions are dynamic and unpredictable.
    \item To cope with the severe data scarcity and latency constraints at inference time, we propose a dense, progress-based reward that provides stable and task-aligned learning signals at every step, allowing robots to refine their behavior during execution.
    \item Extensive experiments in both simulated and real-world robotic environments demonstrate that our approach consistently improves the robustness and success rates of existing SFT- and RL-based VLA models, highlighting its practical value for real-world robotic deployment.
\end{itemize}

\section{Asset License and Consent}
\label{appendix:License}
All models and datasets used in this work are publicly available.  
We strictly comply with their original licenses and use them only for non-commercial academic research.  
The contents of datasets do not represent our views or opinions.

\noindent\textbf{Models.}  
We utilize four open-source models:
Nora (MIT license),
OpenVLA (MIT license),
OpenVLA-RL (MIT license),
TraceVLA (MIT license).
All licenses permit academic research use; detailed terms are available via the original model repositories.  

\noindent\textbf{Datasets.}  
All simulation experiments were conducted in ManiSkill 3. The evaluated tasks are adopted from \cite{liu2025can}, and detailed task descriptions are provided in \S\ref{appendix:tasks_detail}.
The data (16400 demonstration trajectories) used to warm up the base models is collected following the same
procedure as in \cite{liu2025can}, and is generated automatically.

\noindent\textbf{Consent.}  
Our study does not involve crowdsourcing or human subjects. All results are derived from publicly available models and datasets.

\section{Ethics Concerns}
\label{appendix:ethics}
Test-time policy adaptation may increase the risk of unintended or unsafe behaviors, particularly in real-world robotic environments where erroneous actions can result in physical damage, equipment failure, or harm to surrounding objects and people. Because policy updates are performed online and are driven by interaction-derived feedback rather than explicit human supervision, unexpected environmental dynamics or imperfect reward signals may lead to behaviors that deviate from intended task objectives. To mitigate these risks, responsible deployment should incorporate safeguards such as constrained action spaces, explicit safety and termination constraints, and conservative update mechanisms. In addition, human oversight and monitoring remain essential, especially during deployment in safety-critical or unstructured environments, to ensure that adaptive behaviors remain aligned with task goals and safety requirements.

\section{Future Direction}\label{appendix:Discussion}
As discussed in \S\ref{appendix:subsec:VLA}, owing to the architectural distinctions between discretization-based and diffusion-based approaches, our study primarily focuses on the former. Future work should naturally extend our method to diffusion-based formulations, as TT-VLA provides a generalizable solution. 
Another promising direction is to utilize test-time adaptation (TTA) methods for effectively augmenting multimodal information.

It should be noted that these discussions on future direction present engineering
opportunities rather than insurmountable barriers.

\section{AI Disclosure}
\label{appendix:disclosure}
We acknowledge the use of GPT-5 for grammar checking only. The model was employed to correct grammatical errors while ensuring the original meaning and intent of the text remained unchanged.

\end{document}